\newcommand{\E}{\mathbb{E}}
\newcommand{\R}{\mathbb{R}}
\newcommand{\N}{\mathbb{N}}
\newcommand{\calP}{\mathcal{P}}
\newcommand{\calI}{\mathcal{I}}
\newcommand{\calT}{\mathcal{T}}
\def\calN{{\mathcal{N}}}
\newcommand{\W}{\mathcal{W}}
\newcommand{\AW}{\mathcal{AW}}
\newcommand{\cpl}{\mathrm{Cpl}}
\newcommand{\bccpl}{\mathrm{Cpl}_{\mathrm{bc}}}
\newcommand{\tr}{\mathrm{tr}}
\newcommand{\diag}{\mathrm{diag}}
\newcommand{\avar}{\mathrm{AVaR}}
\newtheorem{theorem}{Theorem}
\newtheorem{definition}{Definition}
\newtheorem{proposition}[theorem]{Proposition}
\theoremstyle{remark}
\title{Nested Optimal Transport Distances}
\author{%
Ruben Bontorno\\
  Department of Mathematics\\
  ETH Z\"{u}rich\\
  \texttt{rbontorno@student.ethz.ch} \\
  \And
  Songyan Hou\\
  Department of Mathematics\\
  ETH Z\"{u}rich\\
  \texttt{songyan.hou@math.ethz.ch} \\
}
\begin{document}

\maketitle

\begin{abstract}
Simulating realistic financial time series is essential for stress testing, scenario generation, and decision-making under uncertainty.  
Despite advances in deep generative models, there is no consensus metric for their evaluation.  
We focus on generative AI for financial time series in decision-making applications and employ the \emph{nested optimal transport distance}—a time-causal variant of optimal transport distance, which is robust to tasks such as hedging, optimal stopping, and reinforcement learning. Moreover, we propose a statistically consistent, naturally parallelizable algorithm for its computation, achieving substantial speedups over existing approaches.
\end{abstract}

\section{Introduction}
\label{sec:intro}
Recent advances in generative AI have opened new frontiers for applications in finance, such as financial modeling, stress testing, scenario generation, automated financial services, and decision-making under uncertainty; see \cite{assefa2020generating, ericson2024deep}. The goal of generative AI is to generate synthetic financial data ``indistinguishable'' from real financial data, and the ``indistinguishability'' are evaluated by probabilistic metrics. Due to the unique characteristics in different applications of generated financial data, it is natural to expect no consensus metrics like FID (Fr\'{e}chet Inception Distance) in image generation evaluation. But the lack of even an intra-application consensus-metric makes it difficult to compare different models \cite{rengim2025synthetic}. A qualified consensus-metric is supposed to satisfy the following two criteria: robustness and tractability.

\textbf{Robustness:} The consensus-metric should robustly control a basket of quantities of interest.

\textbf{Tractability:} The metric should be computationally tractable with samples.

A large class of generative AI models for financial time series is trained for decision-making applications, including dynamic hedging, optimal stopping, utility maximization, and reinforcement learning. Notably, these problems are not continuous with respect to widely-used distances, such as the Maximum Mean Discrepancy (MMD)  and the Wasserstein distances ($\W$-distances). However, these problems are Lipschitz-continuous with respect to stronger metrics, called adapted Wasserstein distances ($\AW$-distances), also known as nested Wasserstein distances, which have been introduced as a variant of $\W$-distances that accounts for the time-causal structure of stochastic processes. $\AW$-distances serve as robust distances for many dynamic stochastic optimization problems across several fields, particularly in mathematical finance and economics; see \cite{backhoff2020adapted, bion2009time, glanzer2019incorporating, bartl2023sensitivity}. Therefore, for robustness purposes, it is desirable to evaluate the generated time series distribution under $\AW$-distances.

While $\AW$-distances ensure robustness, the existing implementation calculating $\AW$-distances are slow and does not scale to long time series (see implementations in \cite{pichler2022nested,eckstein2024computational,bayraktar2023fitted}). In this paper, we propose a natural parallelization algorithm to calculate $\AW$-distances, which achieves orders-of-magnitude speedups over existing implementations and also behaves statistically consistently.

\section{Nested optimal transport distances}
\label{sec:not}
We regard $\R^{dT}$ as the space of $d$-dimensional discrete-time paths with $T$ time steps. Notation follows standard conventions and is intended to be self-explanatory; further details or clarifications of notations are provided in Appendix \ref{sec:notation}.
\begin{definition}
    For $\mu,\nu\in\calP_2(\R^{dT})$, the \textit{Wasserstein-2 distance} $\W_2(\cdot,\cdot)$ on $\mathcal{P}_2(\R^{dT})$ is defined by 
    \begin{equation}
        \W_2^2(\mu,\nu) \coloneqq \inf_{\pi \in \cpl(\mu,\nu)}\int \| x - y\|^2 \,\pi(dx,dy),
    \end{equation}
    where $\cpl(\mu,\nu)$ denotes the set of couplings between $\mu$ and $\nu$, that is, probabilities in $\mathcal{P}(\R^{dT}\times \R^{dT})$ with first marginal $\mu$ and second marginal $\nu$. 
\end{definition}
Next, we restrict our attention to couplings $\pi \in \cpl(\mu,\nu)$ such that the conditional law of $\pi$ is still a coupling of the conditional laws of $\mu$ and $\nu$, that is for all $t=0,\dots,T-1$, $\pi_{x_{1:t},y_{1:t}} \in \cpl\big(\mu_{x_{1:t}}, \nu_{y_{1:t}}\big)$. Such couplings are called bi-causal, and denoted by $\bccpl(\mu,\nu)$.
\begin{definition}
     For $\mu,\nu\in\calP_2(\R^{dT})$, the \textit{adapted Wasserstein-2 distance} $\AW_2(\cdot,\cdot)$ on $\mathcal{P}(\R^{dT})$ is defined by 
    \begin{equation}
    \label{eq:adaptedwd} 
        \AW_2^2(\mu,\nu) \coloneqq \inf_{\pi \in \bccpl(\mu,\nu)}\int\sum_{t=1}^T\|x_t-y_t\|^2\,\pi(dx,dy).
    \end{equation}
\end{definition}
The adaptedness (or bi-causality) imposed on couplings modifies the Wasserstein distance to ensure robustness of a large class of ``well-defined" dynamic optimization problems, ranging from optimal stopping and utility maximization to dynamic risk minimization and dynamic hedging; see
\cite{bion2009time, glanzer2019incorporating, bartl2023sensitivity, pflug2014multistage, acciaio2020causal, acciaio2024time}. We therefore present only the general statement here, while providing illustrative examples in the appendix and referring the reader to the cited references for further details. 

\textbf{Robustness (\cite{bartl2023sensitivity, pflug2014multistage, acciaio2020causal, acciaio2024time}). } Let $\mu, \nu \in \calP_2(\R^{dT})$ and $v\colon \calP_2(\R^{dT}) \to \R$, where $v(\mu)$ denotes the optimal value of a ``well-defined'' dynamic optimization problem under $\mu$. Then under ``mild conditions'', there exists $L \geq 0$ such that the following holds 
\begin{equation*}
    \vert v(\mu) - v(\nu) \vert \leq L\,\AW_2(\mu,\nu).
\end{equation*}

From the perspective of nested disintegration, Pflug-Pichler define the adapted Wasserstein distance as nested distance in \cite{pflug2014multistage} and establish an alternative representation of $\AW_2(\cdot,\cdot)$ through the dynamic programming principle; see the proof of Proposition~\ref{prop.dpp} in Appendix~\ref{subsec:proof}.

\begin{proposition}[Dynamic programming principle]
\label{prop.dpp}
Let $\mu,\nu\in\calP_2(\R^{dT})$. Set $V_T^{\mu,\nu} \equiv 0$ and define for all $t = 0,\ldots, T-1$,
\[
V_t^{\mu,\nu}(x_{1:t},y_{1:t}) = \inf_{\pi_{x_{1:t},y_{1:t}}^{t+1}\in \bccpl(\mu_{x_{1:t}}^{t+1},\nu_{y_{1:t}}^{t+1})}\int \Big[\Vert x_{t+1} - y_{t+1}\Vert^2 + V^{\mu,\nu}_{t+1}(x_{1:t+1},y_{1:t+1})\Big]d\pi_{x_{1:t},y_{1:t}}^{t+1}.
\]
Then for all $t = 0,\ldots, T-1$, $ V_t^{\mu,\nu}(x_{1:t},y_{1:t}) = \AW^2_{2}(\mu_{x_{1:t}},\nu_{y_{1:t}})$ and $V_0^{\mu,\nu} \equiv \AW^2_{2}(\mu,\nu)$.
\end{proposition}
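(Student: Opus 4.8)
The plan is to argue by backward induction on $t$, from the terminal time $t=T$ down to $t=0$, carrying along as part of the inductive statement that the map $(x_{1:t},y_{1:t}) \mapsto V_t^{\mu,\nu}(x_{1:t},y_{1:t})$ is jointly measurable (so that the integral defining $V_{t}^{\mu,\nu}$ from $V_{t+1}^{\mu,\nu}$ is well-posed). The base case $t=T$ holds by convention: $V_T^{\mu,\nu}\equiv 0$ agrees with $\AW_2^2$ between the degenerate conditional laws of the empty future. So I would fix $t<T$ and assume $V_{t+1}^{\mu,\nu}(x_{1:t+1},y_{1:t+1}) = \AW_2^2(\mu_{x_{1:t+1}},\nu_{y_{1:t+1}})$.

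The crux is a decomposition/gluing property of bi-causal couplings. Disintegrating the conditional future law $\mu_{x_{1:t}}$ into its first coordinate $\mu_{x_{1:t}}^{t+1}$ and the deeper conditionals $\mu_{x_{1:t+1}}$ (and likewise for $\nu$), every $\pi\in\bccpl(\mu_{x_{1:t}},\nu_{y_{1:t}})$ disintegrates as a one-step coupling $\pi^{t+1}\in\cpl(\mu_{x_{1:t}}^{t+1},\nu_{y_{1:t}}^{t+1})$ together with a measurable family $(x_{t+1},y_{t+1})\mapsto \pi_{x_{1:t+1},y_{1:t+1}}$ of couplings; moreover $\pi$ is bi-causal if and only if each conditional slice $\pi_{x_{1:t+1},y_{1:t+1}}$ is itself a bi-causal coupling of $\mu_{x_{1:t+1}}$ and $\nu_{y_{1:t+1}}$ for $\pi^{t+1}$-a.e. pair. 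This is exactly where the bi-causality constraint enters. Granting this, the tower property yields the additive cost split
\begin{equation*}
\int \sum_{s=t+1}^{T}\|x_s-y_s\|^2\,d\pi = \int\Big[\|x_{t+1}-y_{t+1}\|^2 + \int \sum_{s=t+2}^{T}\|x_s-y_s\|^2\,d\pi_{x_{1:t+1},y_{1:t+1}}\Big]d\pi^{t+1}.
\end{equation*}

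With this decomposition the two inequalities follow. For ``$\geq$'', any bi-causal $\pi$ has inner integral bounded below by $\AW_2^2(\mu_{x_{1:t+1}},\nu_{y_{1:t+1}}) = V_{t+1}^{\mu,\nu}$ (by the infimum defining $\AW_2$ and the induction hypothesis), so the total cost is at least $\int[\|x_{t+1}-y_{t+1}\|^2 + V_{t+1}^{\mu,\nu}]\,d\pi^{t+1}\geq V_t^{\mu,\nu}$; taking the infimum over $\pi$ gives $\AW_2^2(\mu_{x_{1:t}},\nu_{y_{1:t}})\geq V_t^{\mu,\nu}$. For ``$\leq$'', I would fix any one-step coupling $\pi^{t+1}$ and, for each $(x_{t+1},y_{t+1})$, select a conditional bi-causal coupling that is $\epsilon$-optimal for $\AW_2^2(\mu_{x_{1:t+1}},\nu_{y_{1:t+1}})$; gluing these to $\pi^{t+1}$ produces an admissible bi-causal $\pi$ whose cost is within $\epsilon$ of $\int[\|x_{t+1}-y_{t+1}\|^2 + V_{t+1}^{\mu,\nu}]\,d\pi^{t+1}$, and minimizing over $\pi^{t+1}$ and letting $\epsilon\to0$ delivers the reverse bound.

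The main obstacle is the ``$\leq$'' direction: I must choose the conditional couplings in a jointly measurable way and verify that the glued kernel is genuinely a bi-causal coupling, which requires a measurable-selection argument for the $\epsilon$-optimal conditional plans together with measurability of $(x_{1:t+1},y_{1:t+1})\mapsto \AW_2^2(\mu_{x_{1:t+1}},\nu_{y_{1:t+1}})=V_{t+1}^{\mu,\nu}$. This measurability both legitimizes interchanging the infimum with the integral and closes the induction by exhibiting $V_t^{\mu,\nu}$ as measurable. I expect to handle it via standard results on the measurable dependence of (adapted) optimal transport values on their marginals and a Kuratowski--Ryll-Nardzewski type selection, after checking that $x_{1:t}\mapsto\mu_{x_{1:t}}$ depends measurably on the conditioning path. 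The conclusion $V_0^{\mu,\nu}\equiv\AW_2^2(\mu,\nu)$ is then the terminal step of the induction.
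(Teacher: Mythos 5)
Your proposal is correct and takes essentially the same route as the paper: the paper's proof simply invokes \cite[Proposition~5.2]{backhoff2017causal} with cost $c(x,y)=\sum_{t=1}^T\|x_t-y_t\|^2$, and the argument behind that cited result is precisely your backward induction built on the disintegration/gluing characterization of bi-causal couplings, with the measurable-selection step for $\epsilon$-optimal conditional plans being the only genuine technical work --- which you correctly identify and propose to resolve by the standard means.
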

The computation of \(V_t^{\mu, \nu}(x_{1:t}, y_{1:t})\) can be fully parallelized over all admissible pairs \((x_{1:t}, y_{1:t})\), which is the key reason why our algorithm is naturally parallel. Before detailing our algorithm, we need to introduce \emph{adapted empirical measures}, originally proposed in \cite{backhoff2022estimating} to address the statistical estimation problem for $\AW$-distances. 

\begin{definition}
    Let $N \in \N$ be the number of samples and $\Delta_N > 0$. We tile $\R^{dT}$ with cubes with edges of length $\Delta_N$ and let $\varphi^N$ map each such small cube to its center. For $\mu \in \calP_1(\R^{dT})$, we let $(x^{(n)})_{n\in\N}$ be i.i.d. samples from $\mu$. Then we define $\hat{\mu}^N = \sum_{i=1}^{N}\delta_{\varphi^N(x^{(i)})}$
    and call $\hat{\mu}^N$ the adapted empirical measures of $\mu$. 
\end{definition}
 For $\AW$-distances, empirical measures fail to converge, but adapted empirical measures converge to the underlying measure as the number of samples $N$ goes to infinity, which provides statistical guarantee approximating $\mu$ with $\hat \mu^N$ in calculating $\AW$-distances; see the proof of Theorem~\ref{thm:stat} in Appendix~\ref{subsec:proof}.
\begin{theorem}
\label{thm:stat}
    Let $\mu \in \calP_2(\R^{dT})$ and $\Delta_N  = N^{-\frac{1}{dT}}$. Then $\lim_{N\to\infty}\AW_2(\mu,\hat{\mu}^N) = 0$ almost surely.
\end{theorem}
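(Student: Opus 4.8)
The plan is to prove that the adapted empirical measure $\hat\mu^N$ converges to $\mu$ in the $\AW_2$-distance almost surely, with the specific scaling $\Delta_N = N^{-1/(dT)}$. The overall strategy would be to decompose the error into two contributions and control each separately. First I would introduce the quantization (rounding) map $\varphi^N$ that sends a path to the center of its $\Delta_N$-cube, and write $\hat\mu^N$ as the empirical measure of the \emph{rounded} samples $\varphi^N(x^{(i)})$. The natural decomposition is then via a triangle-type inequality for $\AW_2$: compare $\mu$ to its pushforward $\varphi^N_\#\mu$ (a deterministic quantization error) and compare $\varphi^N_\#\mu$ to the empirical measure of the rounded samples (a sampling/fluctuation error). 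Crucially, because $\varphi^N$ acts coordinatewise in time and is adapted to the filtration, the map preserves the time-causal structure, so bi-causal couplings are respected under pushforward and $\AW_2$ satisfies the triangle inequality needed here.

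The second step handles the quantization error. Since $\varphi^N$ moves each path by at most $\tfrac{\sqrt{dT}}{2}\Delta_N$ in Euclidean norm, the identity-type bi-causal coupling between $\mu$ and $\varphi^N_\#\mu$ (couple $x$ with $\varphi^N(x)$) is bi-causal because rounding the first $t$ coordinates depends only on those coordinates. This yields $\AW_2(\mu,\varphi^N_\#\mu)\le \tfrac{\sqrt{dT}}{2}\Delta_N = \tfrac{\sqrt{dT}}{2}N^{-1/(dT)}\to 0$ deterministically. The third step is the genuinely probabilistic part: I would invoke the statistical consistency result for adapted empirical measures from \cite{backhoff2022estimating}, which establishes that for a fixed discretization the empirical adapted measure converges to the adapted (quantized) measure. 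The role of the scaling $\Delta_N = N^{-1/(dT)}$ is to balance the two errors: it must vanish fast enough that quantization bias disappears, yet slowly enough that each of the $O(N)$ cells receives enough samples for the conditional laws to be estimated consistently. This is precisely the regime identified in the adapted empirical measure literature.

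The hard part will be the sampling error, namely showing that the empirical conditional distributions converge to the true conditional distributions uniformly enough across the nested tree of cells to guarantee $\AW_2$-convergence rather than merely $\W_2$-convergence. Unlike the classical Wasserstein case, here convergence must hold at every level of the disintegration simultaneously, and the number of occupied cells grows with $N$, so one cannot simply apply a fixed-partition law of large numbers. The key technical device is to exploit the dynamic programming representation from Proposition~\ref{prop.dpp}: one controls $\AW_2^2(\varphi^N_\#\mu, \hat\mu^N)$ by backward induction on $t$, bounding each conditional $\AW_2$ via the conditional empirical fluctuations and integrating against the occupation measure of the cells. Establishing the almost-sure (rather than in-probability) statement would then require a Borel--Cantelli argument or a concentration bound summable in $N$, leveraging the moment assumption $\mu\in\calP_2$ to control the tails of rarely-visited cells. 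I expect the interplay between the shrinking cell size and the growing cell count—ensuring uniform control of conditional-law estimation over the entire tree—to be the central obstacle, and I would lean heavily on the quantitative estimates already developed in \cite{backhoff2022estimating} to close this gap.
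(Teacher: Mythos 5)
The paper does not reprove this statement at all: its proof is a one-line reduction ($\calP_2 \subset \calP_1$) followed by a citation of Theorem~2.7 in \cite{acciaio2024convergence}, which is exactly the almost-sure convergence of adapted empirical measures under the grid scaling $\Delta_N = N^{-1/(dT)}$. Your proposal instead tries to sketch the internal architecture of that result, and the architecture you describe (split into a deterministic quantization error $\AW_2(\mu,\varphi^N_\#\mu)$ and a sampling error, then run a backward induction through the dynamic programming principle with a Borel--Cantelli/truncation argument for the unbounded case) is indeed the right one. However, as a proof it has two genuine gaps. First, your treatment of the quantization error is flawed: the Monge coupling $(x,\varphi^N(x))_\#\mu$ is \emph{causal} but in general \emph{not bi-causal}. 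Bi-causality requires that, conditionally on $(x_{1:t},y_{1:t})$, the second marginal of the coupling equals $(\varphi^N_\#\mu)_{y_{1:t}}$; the latter is an average of $\varphi^N_\#\mu_{x_{1:t}'}$ over all $x_{1:t}'$ in the same cell as $x_{1:t}$, which generally differs from $\varphi^N_\#\mu_{x_{1:t}}$ for the specific $x_{1:t}$. So the bound $\AW_2(\mu,\varphi^N_\#\mu)\le \tfrac{\sqrt{dT}}{2}\Delta_N$ does not follow from the coupling you exhibit; the literature needs a more careful construction (and obtains a bound of the same order but not by this one-line argument). Second, the sampling error --- which you yourself identify as ``the hard part'' --- is not proved but deferred to ``quantitative estimates'' in \cite{backhoff2022estimating}; note moreover that the convergence results there are formulated under compactness/boundedness-type assumptions, and extending to general $\mu\in\calP_2(\R^{dT})$ with unbounded support via truncation is precisely the content of \cite{acciaio2024convergence}, the reference the paper actually invokes. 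In short: your plan is the correct blueprint, but neither of its two halves is closed, whereas the paper closes both by citing the end result directly.
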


\textbf{Parallel computing algorithm:} 
We now introduce our algorithm, which consists of two main steps: \emph{quantization} and \emph{backward computation}. In the first quantization step, we fix \(\Delta_N\) and map each sample \(x^{(i)} \in \mathbb{R}^{dT}\) to its quantized counterpart \(q^{(i)} \coloneqq \varphi^N(x^{(i)})\) on lattice. Since the quantized samples lie on a lattice in \(\mathbb{R}^{dT}\), if $N$ is sufficiently large, multiple quantized samples coincide up to time \(t\), hereby endowing \(\hat{\mu}^N\) a tree structure; see visualization in Figure~\ref{fig:realfake}. In the second backward computation step, we first leverage this tree structure and compute the conditional distribution:
\[
(\hat{\mu}^N)_{q_{1:t}}^{t+1} = \frac{1}{|\calI_{q_{1:t}}|}\sum_{i \in \calI_{q_{1:t}}}\delta_{q_{t+1}^{(i)}}, \quad \calI_{q_{1:t}} = \{i \leq N \mid q_{1:t}^{(i)} = q_{1:t}\}, \quad q_{1:t} \in \{q_{1:t}^{(i)} \mid i\leq N\}.
\]  
Once \((\hat{\mu}^N)_{q_{1:t}}^{t+1}\) and \((\hat{\nu}^N)_{q_{1:t}}^{t+1}\) have been computed, we can apply the dynamic programming principle to compute \(V_t^{\hat{\mu}^N, \hat{\nu}^N}\) backward, continuing until we reach $V_0^{\hat{\mu}^N, \hat{\nu}^N}$, which equals $\AW^2_{2}(\hat{\mu}^N, \hat{\nu}^N)$ by Proposition~\ref{prop.dpp}. Finally, by Theorem~\ref{thm:stat}, $\AW^2_{2}(\hat{\mu}^N, \hat{\nu}^N)$ converges to $\AW^2_{2}(\mu,\nu)$.

\textbf{Markovian improvement: } In general, the moment convergence rate of $\AW_2$: $\E[\AW_2(\mu, \hat \mu^N)] = O(N^{-\frac{1}{dT}})$; see \cite[Theorem~1.5]{backhoff2022estimating} suffers from the same curse of dimensionality as $\W_2$; see \cite[Theorem~1]{fournier2015rate}. This significantly restricts its applicability to long time series, even in the univariate case. However, if $\mu$ is known to be Markovian, there is a natural Markovian modification of the adapted empirical measure, simply replacing $(\hat{\mu}^N)_{q_{1:t}}^{t+1}$ by $(\hat{\mu}^N)_{q_{t}}^{t+1}$, which can be similarly defined as $(\hat{\mu}^N)_{q_{1:t}}^{t+1}$; see \cite{backhoff2022estimating}. This modification yields substantially improved statistical convergence rates $O(N^{-\frac{1}{2d}})$ that are independent of $T$; see Theorem~6.1 in \cite{backhoff2022estimating}. In addition, the Markovian modification significantly reduces the computational time in practice.

\section{Numerical experiments}
\label{sec:numerical}

For Gaussian distributions, explicit closed-form expressions exist for both the $\AW_2$ and $\W_2$ distances (see Appendix~\ref{subsec:close_form_gaussian}). We assess the convergence of our algorithm by benchmarking against these closed forms. All experiments were conducted on an AMD EPYC 7763 processor (64 cores, 2.45 GHz).

\textbf{Ornstein-Uhlenbeck processes.} Let us first consider the Ornstein-Uhlenbeck (OU) processes $X^\sigma = (X^\sigma_t)_{t\geq 0}$, defined by 
\[
dX^\sigma_t =  - X^\sigma_tdt+\sigma dW_t, \quad X_0 = 0,
\]
where $(W_t)_{t\geq 0}$ is a Brownian motion and $\sigma \in \R_{+}$. Let $N = 5$, $\Delta t = 1/N$, and consider $\tilde X^\sigma_{1:N} \coloneqq[X^\sigma_{\Delta t}, \dots, X^\sigma_{N\Delta t}]$. Denoting by $\mu^\sigma$ the distribution of $\tilde X^\sigma_{1:N}$, we compute $\AW_2^2(\mu^1,\mu^3)$ with randomly generated i.i.d. samples using the Markovian implementation of our algorithm, and evidently, numerical values of $\AW_2^2$ converge to its theoretical value; see Figure~\ref{fig:ou}.
\begin{figure}[H]
    \centering
    \includegraphics[width=\linewidth]{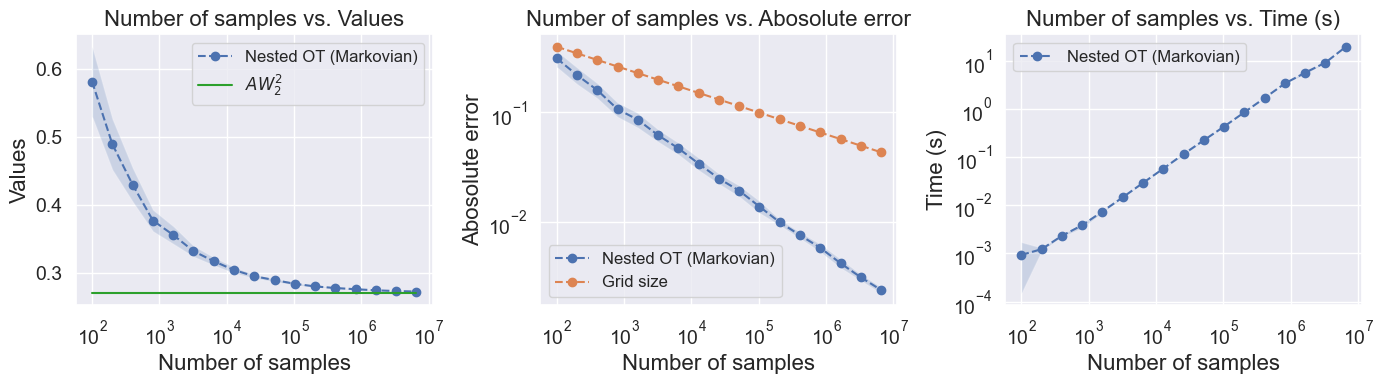}
    \caption{Numerical optimal values, absolute errors, and runtimes for $\AW_2^2$ with Markovian implementation across different sample sizes (averaged over 10 independent runs per sample size).}   
    \label{fig:ou}
\end{figure}

\textbf{Fake Brownian Motion.} Let $(W_t)_{t\in[0,1]}$ be a Brownian motion, $Z \sim \calN(0,1)$ independent of $(W_t)_{t\in[0,1]}$ and $\delta \in (0,1)$. Now, we define a \emph{fake Brownian motion} $X = (X_t)_{t\in[0,1]}$ such that
\[
X_t = \begin{cases}
    W_t + \frac{t - \delta}{1 - \delta} (Z - W_1) + \frac{1 - t}{1 - \delta} (\sqrt{\delta} Z - W_\delta), \quad &t \in [\delta, 1], \\
    \frac{t}{\sqrt \delta} Z,\quad  &t \in [0,\delta]. \\
\end{cases}
\]

\begin{figure}[H]
    \centering
    \includegraphics[width=0.9\linewidth]{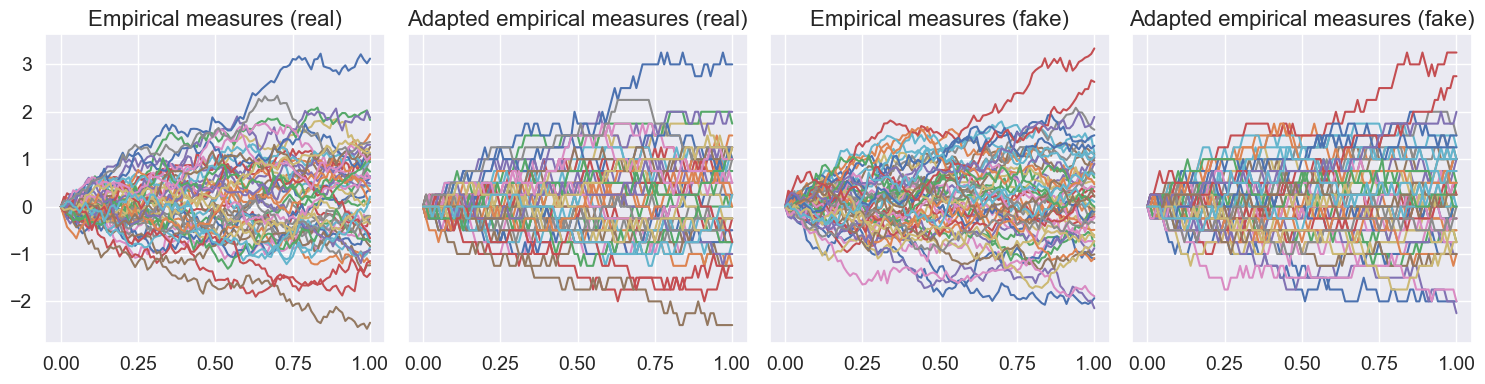}
    \caption{Visualization of empirical measures and adapted empirical measures with $50$ samples for real Brownian motion $(W_t)_{t\geq 0}$ and fake Brownian motion $(X_t)_{t\geq 0}$.}
    \label{fig:realfake}
\end{figure}

The process \(X\) is called ``fake" because, for small \(\delta\), its sample paths are visually indistinguishable from those of Brownian motion; see visualization in Figure~\ref{fig:realfake}.  
However, \(X\) is not a Brownian motion since it satisfies \(X_\delta = \sqrt{\delta}\, X_1\), meaning that \(X_1\) is perfectly predictable at time \(\delta\) from the observation of \(X_\delta\).  
If we interpret \(X\) and \(W\) as asset price processes, they exhibit fundamentally different behaviors:  
the simple strategy of going long on \(X\) when \(X_\delta > 0\) and short on \(X\) when \(X_\delta < 0\) yields arbitrage opportunities, whereas trading solely in \(W\) is arbitrage-free.

Consider the marginal $X_{\delta, t, 1} \coloneqq [X_\delta, X_t, X_1]^\top$, where $0 < \delta < t < 1$, which satisfies that
\[
X_{\delta, t, 1} = L_{\delta, t} 
\begin{bmatrix}
    Z\\
    \frac{W_t}{\sqrt{t}}\\
    \frac{W_1 - W_t}{\sqrt{1-t}}
\end{bmatrix},\quad L_{\delta, t} 
= \begin{bmatrix}
    \sqrt{\delta} & 0 & 0\\
    \frac{t-\delta + (1-t)\sqrt{\delta}}{1-\delta} & \frac{(1-t)(t-\delta)}{\sqrt{t}(1-\delta)} & 0\\
    1 & 0 & 0
\end{bmatrix}.
\]
We denote by \(\mu^X_{\delta,t,1}\) the distribution of \(X_{\delta, t, 1}\) and by \(\mu^W_{\delta,t,1}\) the distribution of \(W_{\delta, t, 1} \coloneqq [W_\delta,\, W_t,\, W_1]^\top\). For \(\delta = 0.1\) and \(t = 0.5\), we compute \(\AW_2^2(\mu^X_{\delta,t,1}, \mu^W_{\delta,t,1})\) and \(\W_2^2(\mu^X_{\delta,t,1}, \mu^W_{\delta,t,1})\) numerically, and compare them with their corresponding theoretical values, calculated by \eqref{eq:W} and \eqref{eq:AW}; see Figure~\ref{fig:fakebm}. Evidently, numerical values of $\AW_2^2$ and $\W_2^2$ converge to their theoretical values, respectively, but $\mu^X_{\delta,t,1}$ and $ \mu^W_{\delta,t,1}$ are only well-distinguished by $\AW_2^2$ and not by $\W_2^2$. The values of $\AW_2^2$ are computed with non-Markovian implementation of our algorithm with \href{https://github.com/justinhou95/NestedOT}{PNOT: Python Nested Optimal Transport} since $X_{\delta,t,1}$ is not Markovian, and those of $\W_2^2$ are computed with POT: Python Optimal Transport \cite{flamary2021pot}.  

\begin{figure}[H]
    \centering
    \includegraphics[width=0.9\linewidth]{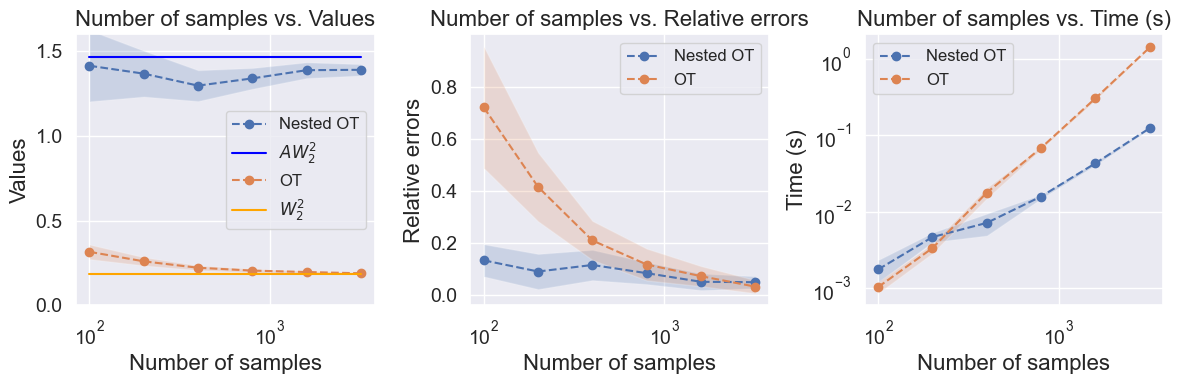}
    \caption{Numerical optimal values, absolute errors, and runtimes for $\AW_2^2$ and $\W_2^2$ across different sample sizes (averaged over 10 independent runs per sample size). The $\AW_2^2$ is calculated with our algorithm with non-Markovian implementation and the $\W_2^2$ is calculated by \texttt{ot.lp.emd} (POT).}   
    \label{fig:fakebm}
\end{figure}

\section{Conclusion}
In this work, we highlight the lack of a consensus metric for evaluating generative AI in financial time series and propose the nested optimal transport. To address the computational challenges of this metric, we develop a naturally parallelizable algorithm that enables substantial speedups over existing methods. Our approach is theoretically supported by statistical analysis and empirically validated through numerical experiments.

\bibliographystyle{unsrtnat}
\bibliography{references.bib}

\begin{thebibliography}{22}
\providecommand{\natexlab}[1]{#1}
\providecommand{\url}[1]{\texttt{#1}}
\expandafter\ifx\csname urlstyle\endcsname\relax
  \providecommand{\doi}[1]{doi: #1}\else
  \providecommand{\doi}{doi: \begingroup \urlstyle{rm}\Url}\fi

\bibitem[Assefa et~al.(2020)Assefa, Dervovic, Mahfouz, Tillman, Reddy, and
  Veloso]{assefa2020generating}
Samuel~A Assefa, Danial Dervovic, Mahmoud Mahfouz, Robert~E Tillman, Prashant
  Reddy, and Manuela Veloso.
\newblock Generating synthetic data in finance: opportunities, challenges and
  pitfalls.
\newblock In \emph{Proceedings of the First ACM International Conference on AI
  in Finance}, pages 1--8, 2020.

\bibitem[Ericson et~al.(2024)Ericson, Zhu, Han, Fu, Li, Guo, and
  Hu]{ericson2024deep}
Lars Ericson, Xuejun Zhu, Xusi Han, Rao Fu, Shuang Li, Steve Guo, and Ping Hu.
\newblock Deep generative modeling for financial time series with application
  in var: A comparative review.
\newblock \emph{arXiv preprint arXiv:2401.10370}, 2024.

\bibitem[Rengim~Cetingoz and Lehalle(2025)]{rengim2025synthetic}
Adil Rengim~Cetingoz and Charles-Albert Lehalle.
\newblock Synthetic data for portfolios: A throw of the dice will never abolish
  chance.
\newblock \emph{arXiv e-prints}, pages arXiv--2501, 2025.

\bibitem[Backhoff-Veraguas et~al.(2020)Backhoff-Veraguas, Bartl, Beiglb{\"o}ck,
  and Eder]{backhoff2020adapted}
Julio Backhoff-Veraguas, Daniel Bartl, Mathias Beiglb{\"o}ck, and Manu Eder.
\newblock Adapted wasserstein distances and stability in mathematical finance.
\newblock \emph{Finance and Stochastics}, 24:\penalty0 601--632, 2020.

\bibitem[Bion-Nadal(2009)]{bion2009time}
Jocelyne Bion-Nadal.
\newblock Time consistent dynamic risk processes.
\newblock \emph{Stochastic Processes and their Applications}, 119\penalty0
  (2):\penalty0 633--654, 2009.

\bibitem[Glanzer et~al.(2019)Glanzer, Pflug, and
  Pichler]{glanzer2019incorporating}
Martin Glanzer, Georg~Ch Pflug, and Alois Pichler.
\newblock Incorporating statistical model error into the calculation of
  acceptability prices of contingent claims.
\newblock \emph{Mathematical Programming}, 174\penalty0 (1):\penalty0 499--524,
  2019.

\bibitem[Bartl and Wiesel(2023)]{bartl2023sensitivity}
Daniel Bartl and Johannes Wiesel.
\newblock Sensitivity of multiperiod optimization problems with respect to the
  adapted wasserstein distance.
\newblock \emph{SIAM Journal on Financial Mathematics}, 14\penalty0
  (2):\penalty0 704--720, 2023.

\bibitem[Pichler and Weinhardt(2022)]{pichler2022nested}
Alois Pichler and Michael Weinhardt.
\newblock The nested sinkhorn divergence to learn the nested distance.
\newblock \emph{Computational Management Science}, 19\penalty0 (2):\penalty0
  269--293, 2022.

\bibitem[Eckstein and Pammer(2024)]{eckstein2024computational}
Stephan Eckstein and Gudmund Pammer.
\newblock Computational methods for adapted optimal transport.
\newblock \emph{The Annals of Applied Probability}, 34\penalty0 (1A):\penalty0
  675--713, 2024.

\bibitem[Bayraktar and Han(2023)]{bayraktar2023fitted}
Erhan Bayraktar and Bingyan Han.
\newblock Fitted value iteration methods for bicausal optimal transport.
\newblock \emph{arXiv preprint arXiv:2306.12658}, 2023.

\bibitem[Pflug and Pichler(2014)]{pflug2014multistage}
Georg~Ch Pflug and Alois Pichler.
\newblock \emph{Multistage stochastic optimization}, volume 1104.
\newblock Springer, 2014.

\bibitem[Acciaio et~al.(2020)Acciaio, Backhoff-Veraguas, and
  Zalashko]{acciaio2020causal}
Beatrice Acciaio, Julio Backhoff-Veraguas, and Anastasiia Zalashko.
\newblock Causal optimal transport and its links to enlargement of filtrations
  and continuous-time stochastic optimization.
\newblock \emph{Stochastic Processes and their Applications}, 130\penalty0
  (5):\penalty0 2918--2953, 2020.

\bibitem[Acciaio et~al.(2024{\natexlab{a}})Acciaio, Eckstein, and
  Hou]{acciaio2024time}
Beatrice Acciaio, Stephan Eckstein, and Songyan Hou.
\newblock Time-causal vae: Robust financial time series generator.
\newblock \emph{arXiv preprint arXiv:2411.02947}, 2024{\natexlab{a}}.

\bibitem[Backhoff et~al.(2022)Backhoff, Bartl, Beiglb{\"o}ck, and
  Wiesel]{backhoff2022estimating}
Julio Backhoff, Daniel Bartl, Mathias Beiglb{\"o}ck, and Johannes Wiesel.
\newblock Estimating processes in adapted wasserstein distance.
\newblock \emph{The Annals of Applied Probability}, 32\penalty0 (1):\penalty0
  529--550, 2022.

\bibitem[Fournier and Guillin(2015)]{fournier2015rate}
Nicolas Fournier and Arnaud Guillin.
\newblock On the rate of convergence in wasserstein distance of the empirical
  measure.
\newblock \emph{Probability Theory and Related Fields}, 162\penalty0
  (3):\penalty0 707--738, 2015.

\bibitem[Flamary et~al.(2021)Flamary, Courty, Gramfort, Alaya, Boisbunon,
  Chambon, Chapel, Corenflos, Fatras, Fournier, et~al.]{flamary2021pot}
R{\'e}mi Flamary, Nicolas Courty, Alexandre Gramfort, Mokhtar~Z Alaya,
  Aur{\'e}lie Boisbunon, Stanislas Chambon, Laetitia Chapel, Adrien Corenflos,
  Kilian Fatras, Nemo Fournier, et~al.
\newblock Pot: Python optimal transport.
\newblock \emph{Journal of Machine Learning Research}, 22\penalty0
  (78):\penalty0 1--8, 2021.

\bibitem[Backhoff-Veraguas et~al.(2017)Backhoff-Veraguas, Beiglbock, Lin, and
  Zalashko]{backhoff2017causal}
Julio Backhoff-Veraguas, Mathias Beiglbock, Yiqing Lin, and Anastasiia
  Zalashko.
\newblock Causal transport in discrete time and applications.
\newblock \emph{SIAM Journal on Optimization}, 27\penalty0 (4):\penalty0
  2528--2562, 2017.

\bibitem[Acciaio and Hou(2024)]{acciaio2024convergence}
Beatrice Acciaio and Songyan Hou.
\newblock Convergence of adapted empirical measures on $\mathbb{R}^d$.
\newblock \emph{The Annals of Applied Probability}, 34\penalty0 (5):\penalty0
  4799--4835, 2024.

\bibitem[Takatsu(2010)]{takatsu2011wasserstein}
Asuka Takatsu.
\newblock On wasserstein geometry of gaussian measures.
\newblock \emph{Probabilistic approach to geometry}, 57:\penalty0 463--472,
  2010.

\bibitem[Gunasingam and Leonard~Wong(2025)]{gunasingam2025adapted}
Madhu Gunasingam and Ting-Kam Leonard~Wong.
\newblock Adapted optimal transport between gaussian processes in discrete
  time.
\newblock \emph{Electronic Communications in Probability}, 30:\penalty0 1--14,
  2025.

\bibitem[Acciaio et~al.(2024{\natexlab{b}})Acciaio, Hou, and
  Pammer]{acciaio2024entropic}
Beatrice Acciaio, Songyan Hou, and Gudmund Pammer.
\newblock Entropic adapted wasserstein distance on gaussians.
\newblock \emph{arXiv preprint arXiv:2412.18794}, 2024{\natexlab{b}}.

\bibitem[Bartl et~al.(2024)Bartl, Beiglb{\"o}ck, and Pammer]{bartl2021WSSP}
Daniel Bartl, Mathias Beiglb{\"o}ck, and Gudmund Pammer.
\newblock The wasserstein space of stochastic processes.
\newblock \emph{Journal of the European Mathematical Society}, 2024.

\end{thebibliography}


\appendix

\section{Technical Appendices and Supplementary Material}
\subsection{Notations}
\label{sec:notation}

We regard $\R^{dT}$ as the space of $d$-dimensional discrete-time paths with $T$ time steps, $x = (x_1,\dots,x_T)$, equipped with the Euclidean norm $\Vert \cdot \Vert$.
To refer to the subpath $(x_s,\dots,x_t)$ of $x$ we also write $x_{s:t}$. Moreover, for $\mu\in\mathcal{P}(\R^{dT})$, we denote the $t$-th marginal of $\mu$ by $\mu_t$, the up-to-time-$t$ marginal of $\mu$ by $\mu_{1:t}$, and the conditional law of $\mu$ conditional on $x_{1:t}$ by $\mu_{x_{1:t}}$, so that the following disintegration holds: $\mu(dx_{1:T}) = \mu_{1:t}(dx_{1:t}) \circ \mu_{x_{1:t}}(dx_{t+1:T})$. To denote the $x_s$-marginal of $\mu_{x_{1:t}}$ for $s>t$, we use the notation $\mu_{x_{1:t}}^s(dx_s) = (x_{t+1:T} \mapsto x_s)_\# \mu_{x_{1:t}}(dx_s)$, where $\#$ designates the push-forward operation.
For notational completeness, we let $\mu_{x_{1:0}} = \mu$ where $x_{1:0}$ is a dummy variable. For $\mu,\nu \in\mathcal{P}(\R^{dT})$, we denote their set of couplings by $\cpl(\mu,\nu)=\{\pi\in\mathcal{P}(\R^{dT}\times\R^{dT}) : \pi(dx \times \R^{dT}) = \mu(dx), \pi(\R^{dT}\times dy) = \nu(dy) \}$, 
and for every coupling $\pi$ we use the analogous notations $\pi_{1:t}$, $\pi_{x_{1:t},y_{1:t}}$, $\pi^s_{x_{1:t},y_{1:t}}$, $\pi_{x_{1:0}, y_{1:0}}$.

\subsection{Robustness}
\textbf{Optimal stopping:} Let $f\colon \R^{d}\times \{1,\ldots,T\} \to \R$ bounded and consider the optimal stopping problem:
\[
v(\mu) = \sup_{\tau \in \calT}\E_{X\sim\mu}[f(X_\tau,\tau)],
\]
where $\calT$ is the set of bounded optimal stopping time adapted to the natural filtration; see \cite[Proposition~4.4]{acciaio2020causal} and \cite[Theorem~2.8]{bartl2023sensitivity}.

\textbf{Utility maximization:} Let $\ell\colon\mathbb{R}\to\mathbb{R}$ be a convex loss function, i.e., $\ell$ is bounded from below and convex. Moreover let $g\colon\mathbb{R}^T\to\mathbb{R}$ be (the negative of) a payoff function and consider the problem:
\[ v(\mu) \coloneqq \inf_{a\in\mathcal{A}} \E_{X\sim\mu}\Big[ \ell\Big( g(X) + \sum_{t=1}^{T-1} a_{t+1}(X)(X_{t+1}-X_{t}) \Big) \Big],\]
where $\mathcal{A}$ denotes the set of all \emph{predictable controls} bounded by $L$, i.e., every $a=(a_t)_{t=1}^T\in\mathcal{A}$ is such that $a_t\colon\mathbb{R}^T\to\mathbb{R}$ only depends on $x_{1:t-1}$ and that $|a_t|\leq B$ for a fixed constant $B$; see \cite[Proposition~4.8]{acciaio2020causal} and \cite[Corollary~2.7]{bartl2023sensitivity}.

\textbf{Risk minimization:} Let $\avar_{\alpha}^\mu(\cdot)$ be the Average Value at Risk (or Expected Shortfall) at confidence level $\alpha > 0$ under $\mu$, and consider the problem:
\[
v(\mu) \coloneqq \inf_{a\in\mathcal{A}}\avar_{\alpha}^\mu\Big(\sum_{t=1}^{T-1} a_{t+1}(X)(X_{t+1}-X_{t}) \Big),
\]
where $\mathcal A$ is defined as above; see \cite[Corollary~6.9]{pflug2014multistage} and \cite[Corollary~3.4]{acciaio2024time}.

\subsection{Proofs.}
\label{subsec:proof}
\begin{proof}[Proof of Proposition~\ref{prop.dpp}]
The proof follows directly from \cite[Proposition 5.2]{backhoff2017causal} by taking the cost function $c(x,y) = \sum_{t=1}^T\|x_t-y_t\|^2$.
\end{proof}
\begin{proof}[Proof of Theorem~\ref{thm:stat}]
    Since $\mu \in \calP_2(\R^{dT})$ implies $\mu \in \calP_1(\R^{dT})$, then by Theorem~2.7 in \cite{acciaio2024convergence}, we prove the almost sure convergence.
\end{proof}

\subsection{Closed-form for Gaussians}
\label{subsec:close_form_gaussian}
For Gaussian distributions, closed-form have been established for both $\AW_2$-distance and $\W_2$-distance.
\begin{theorem}[Theorem~2.2 in \cite{takatsu2011wasserstein}]
\label{thm:W}
Let $\mu = \mathcal N(a,A)$ and $\nu = \mathcal N(b,B)$ be non-degenerate Gaussians on $\R^T$. Then
\begin{equation} 
\label{eq:W}
    \W^2_{2}(\mu, \nu) = |a - b |^2 + \tr\Big(A+B-2\big(A^{\frac{1}{2}}B A^{\frac{1}{2}}\big)^{\frac{1}{2}} \Big).
\end{equation}
\end{theorem}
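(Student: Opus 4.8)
The plan is to exploit the fact that the cost is quadratic, so that the transport problem collapses to a finite-dimensional semidefinite program over covariance matrices. First I would reduce to the centered case: writing $U = X - a$ and $V = Y - b$ for $(X,Y) \sim \pi \in \cpl(\mu,\nu)$, the identity
\[
\E[\|X-Y\|^2] = |a-b|^2 + \E[\|U - V\|^2]
\]
holds because the cross term $2\E[\langle U - V, a - b\rangle]$ vanishes ($U$ and $V$ are centered). This peels off the $|a-b|^2$ summand and leaves us to compute the transport cost between $\calN(0,A)$ and $\calN(0,B)$.

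Next I would observe that for \emph{any} coupling $\pi$ the objective depends only on the second moments: with $C \coloneqq \E[U V^\top]$ the cross-covariance, one has $\E[\|U - V\|^2] = \tr(A) + \tr(B) - 2\tr(C)$, so minimizing the cost is equivalent to maximizing $\tr(C)$ over all admissible $C$. A matrix $C$ arises as the cross-covariance of some coupling with marginal covariances $A,B$ if and only if the joint covariance $\begin{pmatrix} A & C \\ C^\top & B\end{pmatrix}$ is positive semidefinite, and every such $C$ is realized by a jointly Gaussian coupling. Hence the Wasserstein problem is exactly the semidefinite program $\max\{\tr(C) : \begin{pmatrix} A & C \\ C^\top & B\end{pmatrix} \succeq 0\}$, and any Gaussian optimizer is automatically globally optimal among all couplings — this is the clean structural point that makes the whole computation finite-dimensional.

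The third step is to solve this program. Using the Schur complement (here $A,B \succ 0$ by non-degeneracy), positive semidefiniteness of the block matrix is equivalent to writing $C = A^{1/2} K B^{1/2}$ for some contraction $K$ with operator norm at most $1$. Then $\tr(C) = \tr(K B^{1/2} A^{1/2})$, and maximizing over contractions invokes the duality between the operator norm and the nuclear norm (von Neumann's trace inequality): $\max_{\|K\|_{\mathrm{op}}\le 1}\tr(K M) = \tr\big((M^\top M)^{1/2}\big)$. Taking $M = B^{1/2} A^{1/2}$ gives $M^\top M = A^{1/2} B A^{1/2}$, so the maximum equals $\tr\big((A^{1/2} B A^{1/2})^{1/2}\big)$. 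Substituting back yields $\W_2^2(\mu,\nu) = |a-b|^2 + \tr\big(A + B - 2(A^{1/2}BA^{1/2})^{1/2}\big)$, the claimed formula.

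I expect the main obstacle to be the trace-maximization step: establishing $\max_{\|K\|_{\mathrm{op}}\le 1}\tr(KM) = \tr((M^\top M)^{1/2})$ cleanly via the singular value decomposition, and identifying the maximizing $K$ (the orthogonal factor in the polar decomposition of $M^\top$) so that the supremum is genuinely attained by a contraction. This simultaneously confirms that the optimal coupling is linear and exhibits the optimal transport map explicitly. The remaining ingredients — the mean–covariance decoupling, the observation that a quadratic cost sees only second moments, and the Schur-complement parametrization of the feasible set — are routine linear algebra.
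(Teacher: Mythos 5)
Your plan is correct and complete. Note, however, that the paper does not actually prove this statement: Theorem~\ref{thm:W} is imported verbatim as Theorem~2.2 of \cite{takatsu2011wasserstein}, so there is no in-paper argument to compare against. What you have written is the classical self-contained derivation (going back to Olkin--Pukelsheim, Dowson--Landau, and Givens--Shortt): decouple the means, observe that a quadratic cost sees only the cross-covariance $C$, identify the feasible set of cross-covariances with $\{C : \bigl(\begin{smallmatrix} A & C \\ C^\top & B\end{smallmatrix}\bigr)\succeq 0\}$ realized by jointly Gaussian couplings, and solve the resulting trace-maximization by the Schur-complement parametrization $C=A^{1/2}KB^{1/2}$ together with $\max_{\|K\|_{\mathrm{op}}\le 1}\tr(KM)=\tr\bigl((M^\top M)^{1/2}\bigr)$, attained at the orthogonal factor of the polar decomposition. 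Each step checks out: the cross term in the mean reduction vanishes by centering, non-degeneracy of $A$ and $B$ justifies the Schur-complement equivalence $I-K^\top K\succeq 0 \iff \|K\|_{\mathrm{op}}\le 1$, and $M=B^{1/2}A^{1/2}$ gives $M^\top M=A^{1/2}BA^{1/2}$ as needed. The only point worth making explicit when you write it up is the direction "every $C$ with PSD block matrix is realized by a coupling," which you correctly handle by taking the jointly Gaussian law with that covariance; this is also what guarantees the infimum over all couplings is attained and attained by a linear (Gaussian) transport.
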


\begin{theorem}[Theorem~1.1 in \cite{gunasingam2025adapted}] \label{thm:AW}
    Let $\mu = \mathcal N(a,A)$ and $\nu = \mathcal N(b,B)$ be non-degenerate Gaussians on $\R^T$, whose covariance matrices have Cholesky decompositions $A = LL^\top$ and $B = MM^\top$. Then
    \begin{equation}
        \label{eq:AW}
        \AW^2_{2}(\mu,\nu) = |a - b|^2 +
        \tr\big(LL^\top + MM^\top - 2 |M^\top L|\big). 
    \end{equation}
\end{theorem}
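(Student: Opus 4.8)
The plan is to reduce the bicausal optimization to a finite-dimensional problem over cross-covariance matrices by exploiting the innovations representation of Gaussian vectors. Writing $X = a + L\xi$ and $Y = b + M\eta$ with $\xi,\eta \sim \calN(0,I_T)$ standard, and recalling that for quadratic cost the objective depends on the coupling only through its second moments, I would first record the identity $\E\|X-Y\|^2 = |a-b|^2 + \tr(A) + \tr(B) - 2\tr(C)$, where $C = \E[(X-a)(Y-b)^\top] = L K^\top M^\top$ and $K = \E[\eta\xi^\top]$ is the cross-covariance of the innovations. Minimizing the adapted cost is therefore equivalent to maximizing $\tr(C) = \tr(M^\top L K^\top)$ over the cross-covariance matrices $K$ attainable by bicausal couplings.

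Next I would identify the relevant filtrations. Since non-degeneracy forces the Cholesky factors $L,M$ to be lower triangular with strictly positive diagonal, $L^{-1}$ and $M^{-1}$ are lower triangular, so that $\sigma(X_{1:t}) = \sigma(\xi_{1:t})$ and $\sigma(Y_{1:t}) = \sigma(\eta_{1:t})$ for every $t$. Consequently a coupling of $(X,Y)$ is bicausal if and only if the induced coupling of the innovations $(\xi,\eta)$ is bicausal. The crucial structural step is then to show that bicausality forces $K$ to be diagonal: causality from $\xi$ to $\eta$ means $\eta_{1:s}$ is conditionally independent of $\xi_{1:T}$ given $\xi_{1:s}$, and by the tower property together with the independence of the standardized coordinates this yields $K_{sr} = \E[\eta_s\xi_r] = 0$ for $r>s$, i.e. $K$ is lower triangular; the reverse causality symmetrically gives $K$ upper triangular. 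Hence $K = \diag(k_1,\dots,k_T)$, and Cauchy--Schwarz gives $|k_i|\le 1$.

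Conversely, I would verify that every diagonal $K$ with entries in $[-1,1]$ is realized by an explicit Gaussian bicausal coupling, for instance by setting $\eta_t = k_t\xi_t + \sqrt{1-k_t^2}\,\zeta_t$ with $\zeta$ an independent standard Gaussian sequence, so that the attainable set is exactly the diagonal contractions. With $K$ diagonal one has $\tr(C) = \sum_{j} (M^\top L)_{jj}\,k_j$, which is maximized coordinatewise by $k_j = \operatorname{sign}\big((M^\top L)_{jj}\big)$, giving $\max \tr(C) = \sum_j |(M^\top L)_{jj}| = \tr(|M^\top L|)$ with $|\cdot|$ understood entrywise. Substituting back yields $\AW_2^2(\mu,\nu) = |a-b|^2 + \tr(LL^\top + MM^\top - 2|M^\top L|)$. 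The main obstacle is precisely this structural reduction: carefully justifying that bicausality of the path coupling transfers to the innovations coupling and that the conditional-independence constraints collapse exactly to the diagonal-contraction family; once that is established, the remaining optimization is elementary.
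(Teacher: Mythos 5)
The paper does not prove this statement at all: it is imported verbatim as Theorem~1.1 of the cited reference \cite{gunasingam2025adapted}, so there is no internal proof to compare against. Judged on its own, your argument is correct and complete in outline, and it is essentially the innovations-based route used in the literature for the multidimensional generalization (Theorem~\ref{thm.AW_dT}): write $X=a+L\xi$, $Y=b+M\eta$, use that lower-triangular invertibility of $L,M$ makes the filtrations of $X$ and $\xi$ (resp.\ $Y$ and $\eta$) coincide so that bicausality transfers to the innovations, deduce from the two conditional-independence constraints plus $\E[\xi_r\mid\xi_{1:s}]=0$ for $r>s$ that $K=\E[\eta\xi^\top]$ is simultaneously lower and upper triangular, hence a diagonal contraction, and check attainability with $\eta_t=k_t\xi_t+\sqrt{1-k_t^2}\,\zeta_t$. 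This correctly yields $\sup\tr(C)=\sum_t|(M^\top L)_{tt}|$, and your reading of $|M^\top L|$ as entrywise absolute value is the right one (it is forced by consistency with the singular-value formulation of Theorem~\ref{thm.AW_dT} when $d=1$). The original reference instead argues via the dynamic programming principle of Proposition~\ref{prop.dpp}, solving one-step conditional Gaussian problems by backward induction; your covariance argument is arguably cleaner here and has the added merit that the lower bound is established for \emph{all} bicausal couplings (not only Gaussian ones), since it uses only conditional independence and second moments. The only points worth making fully explicit in a write-up are (i) the equivalence between the conditional-marginal definition of bicausality used in Section~\ref{sec:not} and the conditional-independence characterization you invoke, and (ii) that checking both causality directions of the explicit coupling requires the joint independence of $(\xi_t,\zeta_t)$ across $t$; both are routine.
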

For $d$-dimensional Gaussian processes with $T$ time steps and $d > 1$, the $\W_2$-distance follows the same closed form as $\W_2$-distance is regardless of time structure, while the close-form of $\AW_2$-distance is slightly different from the $d=1$ case.
\begin{theorem}[Theorem~2.2 in \cite{acciaio2024entropic}]
\label{thm.AW_dT}
Let $\mu = \mathcal N(a,A)$ and $\nu = \mathcal N(b,B)$ be non-degenerate Gaussians on $\R^{dT}$, whose covariance matrices have Cholesky decompositions $A = LL^\top$ and $B = MM^\top$. Then
\begin{equation}
\label{eq:AW_dT}
    \begin{split}
        \AW_{2}^2(\mu,\nu)
        &= |a-b|^2 + \tr(LL^\top + MM^\top) - 2\tr(S),
    \end{split}
\end{equation}
where $S = \diag(S_1,\dots,S_T)$, with $S_{t}$ being the diagonal matrix of singular values of $(M^\top L)_{t,t}$.
\end{theorem}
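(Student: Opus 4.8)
The plan is to reduce the infinite-dimensional bicausal transport problem to a finite-dimensional optimization over cross-covariances, exploiting that the quadratic cost is a second-moment functional. First I would split off the means: since translating one marginal by a constant is a deterministic adapted (hence bicausal) map, one has $\AW_2^2(\calN(a,A),\calN(b,B)) = |a-b|^2 + \AW_2^2(\calN(0,A),\calN(0,B))$, so it suffices to treat centered Gaussians. For any $\pi \in \bccpl$, the cost depends on $\pi$ only through the diagonal blocks $C_{t,t} := \cov_\pi(x_t,y_t)$, namely $\E_\pi[\sum_t \Vert x_t - y_t\Vert^2] = \tr(A) + \tr(B) - 2\sum_{t=1}^T \tr(C_{t,t})$. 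Hence computing $\AW_2^2$ amounts to \emph{maximizing} $\sum_t \tr(C_{t,t})$ over all bicausal couplings.

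Next I would pass to innovation coordinates $\xi = L^{-1}x$ and $\eta = M^{-1}y$. Because $L,M$ are block-lower-triangular and invertible, $\sigma(x_{1:t}) = \sigma(\xi_{1:t})$ and $\sigma(y_{1:t}) = \sigma(\eta_{1:t})$, so bicausality of a coupling is preserved under this change of variables; moreover the $\xi$- and $\eta$-marginals are standard Gaussians with coordinates independent across time. Writing $R_{ij} := \cov_\pi(\xi_i,\eta_j)$ and $C = L R M^\top$, the key lemma is that \emph{every} bicausal coupling has block-diagonal $R$. I would prove this by combining the causal conditional independence $\eta_{1:t}\perp \xi_{t+1:T}\mid \xi_{1:t}$ with the law of total covariance: the conditional cross-covariance term contributes nothing, and the remaining term vanishes because $\E[\xi_i\mid \xi_{1:t}] = 0$ for $i>t$ by marginal independence of the innovations. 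The symmetric causal direction kills the lower-triangular blocks, leaving $R = \diag(R_1,\dots,R_T)$ with each $\Vert R_t\Vert_{\mathrm{op}}\le 1$ (from positive semidefiniteness of the joint covariance).

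A short computation then gives $\sum_t \tr(C_{t,t}) = \sum_{t=1}^T \tr\big((M^\top L)_{t,t}\, R_t\big)$, where $(M^\top L)_{t,t} = \sum_{s\ge t} M_{s,t}^\top L_{s,t}$ is exactly the $(t,t)$ block of $M^\top L$; this is where the tail loadings aggregate and why the answer features the full block rather than $M_{t,t}^\top L_{t,t}$. Maximizing each summand over $\Vert R_t\Vert_{\mathrm{op}}\le 1$ is the duality between the operator and nuclear norms (von Neumann's trace inequality), giving $\max_{R_t}\tr((M^\top L)_{t,t}R_t) = \tr(S_t)$, attained at the orthogonal factor $R_t = V_t U_t^\top$ of the singular value decomposition $(M^\top L)_{t,t} = U_t S_t V_t^\top$. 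Summing yields the lower bound $\E_\pi[\mathrm{cost}]\ge \tr(A)+\tr(B)-2\tr(S)$ over all of $\bccpl$; the matching upper bound follows by assembling the Gaussian coupling whose innovation cross-covariance is this block-diagonal optimizer, which is bicausal precisely because $R$ is block-diagonal.

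I expect the main obstacle to be the block-diagonality lemma: one must ensure it holds for \emph{arbitrary} (not merely Gaussian) bicausal couplings, so that the lower bound is genuinely over $\bccpl$ rather than over a Gaussian subclass. The argument hinges on the across-time independence of each marginal's innovations, which forces the relevant conditional means to vanish and thereby lets a covariance-level conclusion follow from the distributional conditional-independence constraint. A secondary point to verify carefully is the equivalence ``block-diagonal $R$ $\Leftrightarrow$ bicausal'' for Gaussian couplings, needed for the upper bound, which reduces to checking that the Gaussian conditional cross-covariances vanish; alternatively this whole step can be organized through the dynamic programming principle of Proposition~\ref{prop.dpp}, recovering the one-dimensional closed form of Theorem~\ref{thm:AW} as the special case $d=1$.
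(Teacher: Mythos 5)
The paper does not actually prove this theorem — it is imported verbatim as Theorem~2.2 of \cite{acciaio2024entropic} — so there is no in-paper argument to compare against. Your proposal is correct and is essentially the proof from that reference: reduce to centered marginals, pass to innovations $\xi=L^{-1}x$, $\eta=M^{-1}y$ (legitimate because the Cholesky factors are block-lower-triangular with invertible diagonal blocks, so $\sigma(x_{1:t})=\sigma(\xi_{1:t})$), show that bicausality forces the innovation cross-covariance to be block-diagonal for \emph{arbitrary} couplings via conditional independence plus the law of total covariance and the vanishing conditional means $\E[\xi_i\mid\xi_{1:t}]=0$, and then optimize blockwise with von Neumann's trace inequality, attaining the bound with the orthogonal factors of the SVD of $(M^\top L)_{t,t}$.
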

When the marginal $\mu$ is degenerate, the Cholesky decomposition is not unique. Different choice of Cholesky decomposition leads to different $\AW$-distances. This might sounds weird at the first glance. However, if you see different Cholesky decomposition characterizing different filtration structure, this totally make sense for stochastic process. For example, $L = \begin{bmatrix}
    0 & 0 \\ 1 & 1\\
\end{bmatrix}$ and $M = \begin{bmatrix}
    0 & 0 \\ 0 & \sqrt{2}\\
\end{bmatrix}$ lead to the same Gaussian distribution because $LL^\top = MM^\top$. However, from a stochastic process point of view, they are indeed different. Let $G \sim \calN(0,I_2)$ be a noise process and define two processes: $X = LG$ and $Y = MG$. Let $\mathcal F_t = \sigma(G_{1:t}), t=1,2$ the natural filtration generated by the noise process. Then $Y_2 = \sqrt{2}G_2$ is independent of $\mathcal F_1$ while this is not the case for $X_2 = G_1 + G_2$. This argument is formally addressed by extending $\AW$-distance to \emph{filtered processes} $\mathcal F \mathcal P$ in \cite{bartl2021WSSP}, where $(\mathcal F \mathcal P, \AW)$ builds a geodesic space, which is isometric to a classical Wasserstein space. Under the filtered process setting, Gaussian process are characterized uniquely by the Cholesky decomposition instead of covariance matrix, and by following the same proof in \cite{acciaio2024entropic}, the closed-form of \eqref{eq:AW} and \eqref{eq:AW_dT} generalizes to degenerate Gaussian distributions.

\end{document}